%% file: elsarticle-template-num.tex
\documentclass[preprint,12pt]{elsarticle}

\usepackage{amssymb}
\usepackage{amsfonts}

\usepackage{amsmath}
\usepackage{float}
\usepackage{algpseudocode}
\usepackage{algorithm}
\usepackage{derivative}
\usepackage{comment}
\usepackage{multirow}
\usepackage{booktabs}

\usepackage{pgfplots}
\pgfplotsset{compat=1.18}
\usepackage{tikz}
\usetikzlibrary{arrows.meta,positioning,calc}

\usepackage{mathtools}

\DeclarePairedDelimiter\floor{\lfloor}{\rfloor}

\usepackage{xspace}
\usepackage{xcolor}
\usepackage{tikz}
\newcommand{\petra}[1]{{\color{brown}PB: #1}}

\renewcommand{\mp}{max-pooling\xspace}
\journal{Nuclear Physics B}

\usepackage{amsthm}
\usepackage{graphicx}
\newtheorem{theorem}{Theorem}[section]

\theoremstyle{remark}
\newtheorem{property}{Property}[section]

\begin{document}

\begin{frontmatter}



\author[label1]{Jaspreet Singh\corref{cor1}}
\ead{J.Singh@Tees.ac.uk}

\author[label2]{Petra Bosilj}
\ead{petra.bosilj@maastrichtuniversity.nl}

\author[label3]{Grzegorz Cielniak}
\ead{GCielniak@lincoln.ac.uk}

\title{Accurate Shift Invariant Convolutional Neural Networks Using Gaussian-Hermite Moments}

\cortext[cor1]{Corresponding author}
\affiliation[label1]{
    organization={School of Computing, Engineering \& Digital Technologies, Teesside University},
    city={London},
    country={UK}
}

\affiliation[label2]{
    organization={Department of Advanced Computing Sciences, Maastricht University},
    city={Maastricht},
    country={The Netherlands}
}

\affiliation[label3]{
    organization={School of Agri-food Technology and Manufacturing, University of Lincoln},
    city={Lincoln},
    country={UK}
}

\begin{abstract}
The convolutional neural networks (CNNs) are not inherently shift invariant or equivariant. The downsampling operation, used in CNNs, is one of the key reasons which breaks the shift invariant property of a CNN. Conversely, downsampling operation is important to improve computational efficiency and increase the area of the receptive field for more contextual information. In this work, we propose Gaussian-Hermite Sampling (GHS), a novel downsampling strategy designed to achieve accurate shift invariance. GHS leverages Gaussian-Hermite polynomials to perform shift-consistent sampling, enabling CNN layers to maintain invariance to arbitrary spatial shifts prior to training. When integrated into standard CNN architectures, the proposed method embeds shift invariance directly at the layer level without requiring architectural modifications or additional training procedures. We evaluate the proposed approach on CIFAR-10, CIFAR-100, and MNIST-rot datasets. Experimental results demonstrate that GHS significantly improves shift consistency, achieving 100\% classification consistency under spatial shifts, while also improving classification accuracy compared to baseline CNN models.  
\end{abstract}

\begin{graphicalabstract}
\end{graphicalabstract}

\begin{highlights}
\item Gaussian-Hermite Sampling is introduced to achieve accurate shift invariance in CNNs.

\item It provides guarantied invariance at each convolutional layer without extra training.

\item Acts as a built-in low-pass filter, reducing aliasing and noise sensitivity.

\item Achieves 100\% consistency for shifts on CIFAR-10, CIFAR-100, and MNIST-rot datasets.
\end{highlights}

\begin{keyword}
Shift invariance \sep Equivariance \sep CNN \sep Gaussian-Hermite moments  


\end{keyword}

\end{frontmatter}


\section{Introduction}
In computer vision, it is highly desirable for models to produce consistent outputs under affine geometric transformations such as translation, rotation, scaling, and reflection. This consistency can be expressed through two closely related but distinct concepts: \textit{invariance} and \textit{equivariance}. An operator is \textit{invariant} under a given transformation if the output remains unchanged when the input is transformed (Fig.~\ref{fig:equ_inv}, left). Invariance is key to building effective classification models where an object should be recognised regardless of its position or viewpoint. An operator is said to be \textit{equivariant} under a given transformation if applying the transformation before or after applying the operator provides equivalent results. For example, the output of a shift-equivariant operator will be spatially shifted if the input is shifted (Fig.~\ref{fig:equ_inv}, right). On the other hand, equivariance is essential for tasks that depend on precise localisation, such as object detection or semantic segmentation.

Although convolutional neural networks (CNNs) were traditionally assumed to be translation- or shift-invariant, recent studies~\cite{azulay2019deep, zhang2019making} have shown that small spatial shifts can cause considerable inconsistencies in network outputs. While convolution operations and nonlinear activations are shift-equivariant, downsampling operations such as pooling and strided convolutions disrupt this property. Nevertheless, these operations remain essential, as they reduce computational cost and the number of parameters, while enlarging the receptive field and enabling higher-level feature abstraction.

\begin{figure}[t]
\centering
\resizebox{\linewidth}{!}{%
\begin{tikzpicture}[
  font=\sffamily,
  box/.style={draw, line width=0.8pt, minimum width=2.2cm, minimum height=1.8cm},
  arr/.style={-{Latex[length=3mm,width=2mm]}, line width=0.9pt},
  darr/.style={{Latex[length=3mm,width=2mm]}-{Latex[length=3mm,width=2mm]}, line width=0.9pt},
  lab/.style={midway, fill=white, inner sep=1.2pt},
]

\begin{scope}[xshift=-7.6cm, yshift=-0.05cm]

\draw[line width=0.9pt] (-0.9,-4.25) rectangle (16.1,4.35);

\draw[densely dashed, line width=0.9pt] (7.6,-4.25) -- (7.6,4.35);

\newcommand{\fillsquare}[5]{%
  \fill[#5] ($ (#1.#2) + (#3,#4) $) rectangle ++(0.35,0.35);
}

\begin{scope}[xshift=-7.6cm]
\node[font=\large] at (11.2,3.75) {Invariance};

\node[box] (Rf)  at (9.2,1.55) {};
\node[box] (Ro)  at (12.8,1.55) {};
\node[box] (Rg)  at (9.2,-1.95) {};
\node[box] (Rgo) at (12.8,-1.95) {};

\node[above=1mm of Rf] {Image $f$};
\node[below=1mm of Rg] {Shifted Image};

\fillsquare{Rf}{north west}{0.22}{-0.57}{red}
\fillsquare{Ro}{north west}{0.22}{-0.57}{gray!70}

\fill[red] ($(Rg.south)+(0.25,0.25)$) rectangle ++(0.35,0.35);
\fillsquare{Rgo}{north west}{0.22}{-0.57}{gray!70}

\draw[arr] (Rf.east) -- (Ro.west) node[pos=0.5, above=3pt] {$O(f)$};
\draw[arr] (Rf.south) -- (Rg.north) node[pos=0.5, left=4pt] {$g(f)$};
\draw[arr] (Rg.east) -- (Rgo.west) node[pos=0.5, above=3pt] {$O(g(f))$};

\draw[darr] (Rgo.north) -- (Ro.south);
\end{scope}

\begin{scope}[xshift=+8.6cm]
\node[font=\large] at (3.35,3.75) {Equivariance};

\node[box] (Lf)  at (1.7,1.55) {};
\node[box] (Lo)  at (5.3,1.55) {};
\node[box] (Lg)  at (1.7,-1.95) {};
\node[box] (Lgo) at (5.3,-1.95) {};

\node[above=1mm of Lf] {Image $f$};
\node[below=1mm of Lg] {Shifted Image};

\fillsquare{Lf}{north west}{0.22}{-0.57}{red}
\fillsquare{Lo}{north west}{0.22}{-0.57}{gray!70}

\fillsquare{Lg}{south east}{-0.57}{0.22}{red}
\fillsquare{Lgo}{south east}{-0.57}{0.22}{gray!70}

\draw[arr] (Lf.east) -- (Lo.west) node[pos=0.5, above=3pt] {$O(f)$};
\draw[arr] (Lf.south) -- (Lg.north) node[pos=0.5, left=4pt] {$g(f)$};
\draw[arr] (Lo.south) -- (Lgo.north) node[pos=0.5, left=4pt] {$g(O(f))$};
\draw[arr] (Lg.east) -- (Lgo.west) node[pos=0.5, above=4pt] {$O(g(f))$};
\end{scope}

\end{scope} 

\end{tikzpicture}%
}
\caption{
Illustration of invariance (left) and equivariance (right).
In invariance, the operator output remains unchanged under the transformation.
In equivariance, applying the operator after a transformation yields the same result as transforming the operator output.
}
\label{fig:equ_inv}
\end{figure}

Several studies have attempted to improve shift consistency by modifying the downsampling process. Azulay and Weiss~\cite{azulay2019deep} demonstrated that anti-aliasing feature maps can mitigate sensitivity to small translations. Building on this idea, Zhang~\cite{zhang2019making} proposed combining dense \mp with strided blurred pooling to restore shift invariance. Although such low-pass filtering techniques reduce aliasing, excessive blurring can suppress discriminative details and degrade recognition accuracy. To address this, Chaman and Dokmanic~\cite{chaman2021truly} introduced Adaptive Polyphase Sampling (APS), a more robust alternative that dynamically selects stable subsampling grids across shifted inputs, preserving feature consistency under translation. APS does not inherently address aliasing effects introduced by subsampling, and is often combined with anti-aliasing filters, which sometimes lead to over-smoothing of feature maps and potential degradation in classification accuracy. Despite these advances, achieving strict and guaranteed shift invariance in CNNs remains an open challenge.

Image moments provide a principled mathematical framework for constructing invariant image representations and have been widely used in computer vision and pattern recognition \cite{zhang2004review}. Moments summarise image content by projecting it onto predefined basis functions and are broadly categorized as non-orthogonal (e.g., geometric \cite{hu1962visual} and complex moments \cite{abu1985image}) or orthogonal moments \cite{law2006image}. While non-orthogonal moments are simple to compute, they often exhibit redundancy and limited reconstruction capability. In contrast, orthogonal moments employ mutually orthogonal basis functions, yielding compact, non-redundant representations with strong discriminative power and accurate reconstruction from a finite set of coefficients. Moments can further be defined in continuous or discrete forms \cite{singh2021survey}, depending on whether their kernels are analytically defined over continuous domains or directly adapted to digital grids. Notably, several orthogonal moments-particularly those defined in the polar domain such as Zernike moments \cite{teague1980image}-naturally exhibit rotation invariance, and with appropriate normalization can also achieve translation and scale invariance. Owing to these properties, moments have long served as robust global descriptors for achieving geometric invariance in visual recognition tasks.

In this paper, we introduce a novel downsampling technique based on discrete orthogonal Gaussian–Hermite moments (GHM) \cite{yang2011image, yang2018rotation}, which are constructed using Gaussian-weighted Hermite polynomials to capture localized image structure in an orthogonal basis. The proposed downsampling technique is entirely parameter-free and does not require additional learning or training overhead. The main contributions of this work are as follows:
\begin{itemize}
    \item We propose {Gaussian–Hermite Sampling (GHS)}, the first downsampling method based on orthogonal Gaussian–Hermite polynomials.
    \item We provide a mathematical framework proving that GHS achieves exact layer-wise shift invariance without requiring additional learnable parameters.
    \item The proposed downsampling method reduces aliasing and sensitivity to noise, as the Gaussian-weighted formulation inherently acts as a low-pass filter.
    \item Experiments on CIFAR-10, CIFAR-100, and MNIST-rot demonstrate 100\% shift consistency and improved accuracy.
\end{itemize}

\section{Related work}
The equivariance and invariance properties of CNNs under geometric transformations such as translation, rotation, scaling, and reflection have been widely studied~\cite{azulay2019deep, pmlr-v48-cohenc16, marcos2017rotation, hoogeboom2018hexaconv}. It has become evident that architectures specifically designed to incorporate equivariance or invariance demonstrate significantly improved performance, even when ample training data and computational resources are available~\cite{brehmer2024does}. 
We first review methods addressing consistency under spatial shifts (Section~\ref{ssec:2_1}), followed by translation invariance in moment-based operators (Section~\ref{ssec:2_2}) and then discuss approaches extending invariance to other geometric transformations (Section~\ref{ssec:2_3}).

\subsection{Shift invariance}
\label{ssec:2_1}

Within CNNs, pooling layers are commonly employed to introduce local shift invariance. Among various pooling techniques, \mp ~\cite{ranzato2007sparse}, which selects the maximum activation within a region, and average-pooling~\cite{lecun1989handwritten}, which propagates the mean value, are the most widely used. Stochastic-pooling~\cite{zeiler2013stochastic} randomly selects an activation within each pooling region, while mixed max-average-pooling~\cite{lee2016generalizing} combines the strengths of both max- and average-pooling. Despite their popularity, these pooling operators are neither fully equivariant nor invariant to arbitrary shifts, often producing inconsistent responses when the input is slightly translated~\cite{azulay2019deep}. 

To address these limitations, several studies have investigated the underlying factors that disrupt shift consistency in CNNs. Azulay and Weiss~\cite{azulay2019deep} demonstrated that even minor input shift can lead to substantial deviations in network outputs, highlighting two primary sources of shift inconsistency: (i) aliasing introduced by improper signal sampling during downsampling operations, and (ii) non-equivalent sampling patterns arising from strided convolutions and pooling layers. These factors collectively cause CNNs to respond inconsistently to shifted inputs, even when their convolutional kernels are shift-equivariant. Although data augmentation can improve robustness to small shifts, it fails to enforce true invariance and does not address the underlying architectural causes.

To mitigate aliasing effects, Zhang~\cite{zhang2019making} proposed decomposing \mp into a dense max operation followed by subsampling, introducing a low-pass filter (LPF) between these steps to suppress high-frequency components that cause shift inconsistency. This modification significantly enhanced the shift stability of CNN outputs. Building upon this foundation, Zou et al.~\cite{zou2020delving, zou2023delving} introduced content-aware low-pass filtering, which adaptively adjusts filter strength based on local frequency characteristics to balance feature preservation and anti-aliasing. Complementarily, Chaman and Dokmanic~\cite{chaman2021truly} proposed Adaptive Polyphase Sampling (APS), a shift-invariant subsampling strategy that selects grid locations with maximal energy, thereby maintaining consistent feature representations across spatial shifts and offering a principled alternative to conventional downsampling layers.

\subsection{Translation invariance in moment-based descriptors}
\label{ssec:2_2}
Translation invariance has also been widely studied in moment-based image descriptors~\cite{chen2012quaternion}. A common strategy achieves translation invariance by shifting the origin of the image coordinate system to the centroid of the image. The centroid $(x_c, y_c)$ is computed using geometric moments of order zero and one, where $m_{00}$ denotes the zeroth-order geometric moment and $m_{10}$ and $m_{01}$ denote the first-order geometric moments along the $x$ and $y$ directions, respectively. The centroid is obtained as $x_c = m_{10}/m_{00}$ and $y_c = m_{01}/m_{00}$, and translation-invariant moments are then computed with respect to this shifted coordinate system~\cite{yang2011rotation,singh2018quaternion}. However, this normalization implicitly assumes that the image function is defined over an infinite domain and that pixels outside the image boundary are surrounded by zeros. In practical scenarios, digital images are defined on a finite support, and spatial shifts near the image boundaries introduce truncation or wrap-around effects that violate this assumption. Consequently, the centroid-based translation normalization may fail to accurately compensate for spatial shifts when objects move toward image boundaries, limiting the robustness of such moment-based translation invariance methods in real-world applications.

\subsection{General geometric equivariance and invariance}
\label{ssec:2_3}
Beyond shift, robustness to geometric transformations such as rotation and scaling is commonly pursued via data augmentation. However, augmentation does not provide intrinsic equivariance or invariance, fails to guarantee generalization to unseen transformations, and increases training cost~\cite{lafarge2021roto}. Architectures explicitly encoding geometric symmetries~\cite{pmlr-v48-cohenc16, marcos2017rotation, hoogeboom2018hexaconv} consistently outperform data augmentation, even with abundant data and computation~\cite{brehmer2024does}.

To achieve intrinsic invariance, Cohen and Welling~\cite{pmlr-v48-cohenc16} proposed the group-equivariant CNN (G-CNN), which incorporates discrete rotational and reflectional symmetries within a group-theoretic framework. Hoogeboom et al.~\cite{hoogeboom2018hexaconv} extended this concept with HexaConv, exploiting six-fold rotational symmetry for structured data. Further generalization to continuous transformations was achieved through steerable CNNs~\cite{cohen2016steerable, NEURIPS2019_45d6637b} and harmonic networks~\cite{Worrall_2017_CVPR}, which employ steerable filters to maintain equivariance across a continuous range of geometric transformations.

\section{Proposed Method}
\input{method_new.tex}

\section{Experiments} \label{sec4}
\input{experiments_new}

\section{Conclusion}
We introduced Gaussian-Hermite Sampling (GHS), a novel downsampling layer designed to achieve exact shift invariance throughout a Convolutional Neural Network (CNN). GHS utilizes 2D Gaussian–Hermite Polynomials (GHPs) as an orthogonal basis, where the built-in Gaussian-weighted mechanism acts as a natural low-pass filter to effectively eliminate aliasing during downsampling. Experiments on CIFAR-10, CIFAR-100, and MNIST-rot rigorously confirm $100\%$ consistency under input shifts and demonstrate improved robustness compared to previously established shift-invariant downsampling techniques.

The main limitation of the current GHS formulation is its computational cost, particularly for large feature maps. This overhead stems from the reliance on higher-order Gaussian-Hermite polynomials, which require evaluating computationally demanding factorial terms. Future work will focus on developing more efficient implementations, for example by exploiting recurrence relations to reduce dependence on expensive factorial computations, and on extending GHS to more complex computer vision tasks such as segmentation and object detection.

\bibliographystyle{elsarticle-num} 
\bibliography{reference}

\end{document}

%% file: method_new.tex
In this subsection, we discuss Gaussian-Hermite polynomials (GHPs) and discrete Gaussian-Hermite moments (GHMs), their fundamental mathematical properties, and introduce the proposed shift-invariant downsampling formulation.

\subsection{Gaussian-Hermite polynomials}
The continuous Hermite polynomials ~\cite{yang2011image,yang2017scale} of order $p\in \mathbb{N}_0$ (a set of non-negative integers including 0) are defined for $x\in \mathbb{R}$ as:
\begin{equation}
    H_p(x) = (-1)^p \exp{(x^2)}\odv*[order=p]{\exp(-x^2)}{x},
    \label{eq:HP}
\end{equation}
and can be explicitly rewritten in series form:
\begin{equation}
    H_p(x) = p!\sum_{k=0}^{p/2}{\frac{(-1)^k(2x)^{p-2k}}{k!(p-2k)!}}.
    \label{eq:HPS}
\end{equation}
Since, Eq.\@~\eqref{eq:HPS} is not appropriate for fast numerical calculations \cite{yang2017scale}, a 3-term recurrence relation \cite{yang2015design} is given to facilitate the fast computation of Hermite polynomials:
\begin{equation}
    \begin{split}
        H_0(x) &= 1, \\ 
        H_1(x) &= 2x, \\
        H_{p}(x) &= 2xH_{p-1}(x) - 2pH_{p-2}(x), \quad \forall p > 1.
    \end{split}
    \label{eq:HR}
\end{equation}

The Hermite polynomials, when multiplied by a Gaussian weight function, satisfy the following orthogonality condition on $\mathbb{R}$:
\begin{equation}
    \int_{-\infty} ^ {\infty} H_p(x)H_q(x)\exp(-x^2)\odif{x} = 2^p p!\sqrt{\pi}\delta_{pq}, 
    \label{eq:orthogonal}
\end{equation}
where $\delta_{pq}$ is the Kronecker delta, defined as $\delta_{pq} = [ p=q ]$ and $H_p(x)$, and $H_q(x)$ are the Hermite polynomials defined in Eq.\@~\eqref{eq:HPS}. However, it can be seen from Eq.\@~\eqref{eq:orthogonal} that Hermite polynomials are not orthonormal (i.e., not normalized to unit energy). Moreover, their amplitudes grow rapidly as $|x|$ increases, resulting in poor spatial localisation and high dynamic range. This leads to numerical instability and sensitivity to noise, making them unsuitable for direct image description \cite{yang2017scale,yang2018rotation}.To obtain orthonormal polynomials, we modulate them with a Gaussian function of scale $\sigma$, and obtain (continuous) Gaussian-Hermite polynomials (cGHP) \cite{yang2011image,yang2017scale}:
\begin{equation}
    \widehat{H}_p(x,\sigma) = \frac{1}{\sqrt{2^pp!\sigma\sqrt{\pi}}} \exp{\left(-\frac{x^2}{2\sigma^2}\right)}H_p\left(\frac{x}{\sigma}\right),
    \label{eq:GHP}
\end{equation}
which then additionally satisfy the orthonormality condition on $\mathbb{R}$:
\begin{equation}
    \int_{-\infty} ^ {\infty} \widehat{H}_p(x, \sigma)\widehat{H}_q(x, \sigma)\odif{x} = \delta_{pq}.
    \label{eq:orthonormal}
\end{equation}

\begin{figure}[t]
  \centering
   \includegraphics[width=0.7  \linewidth]{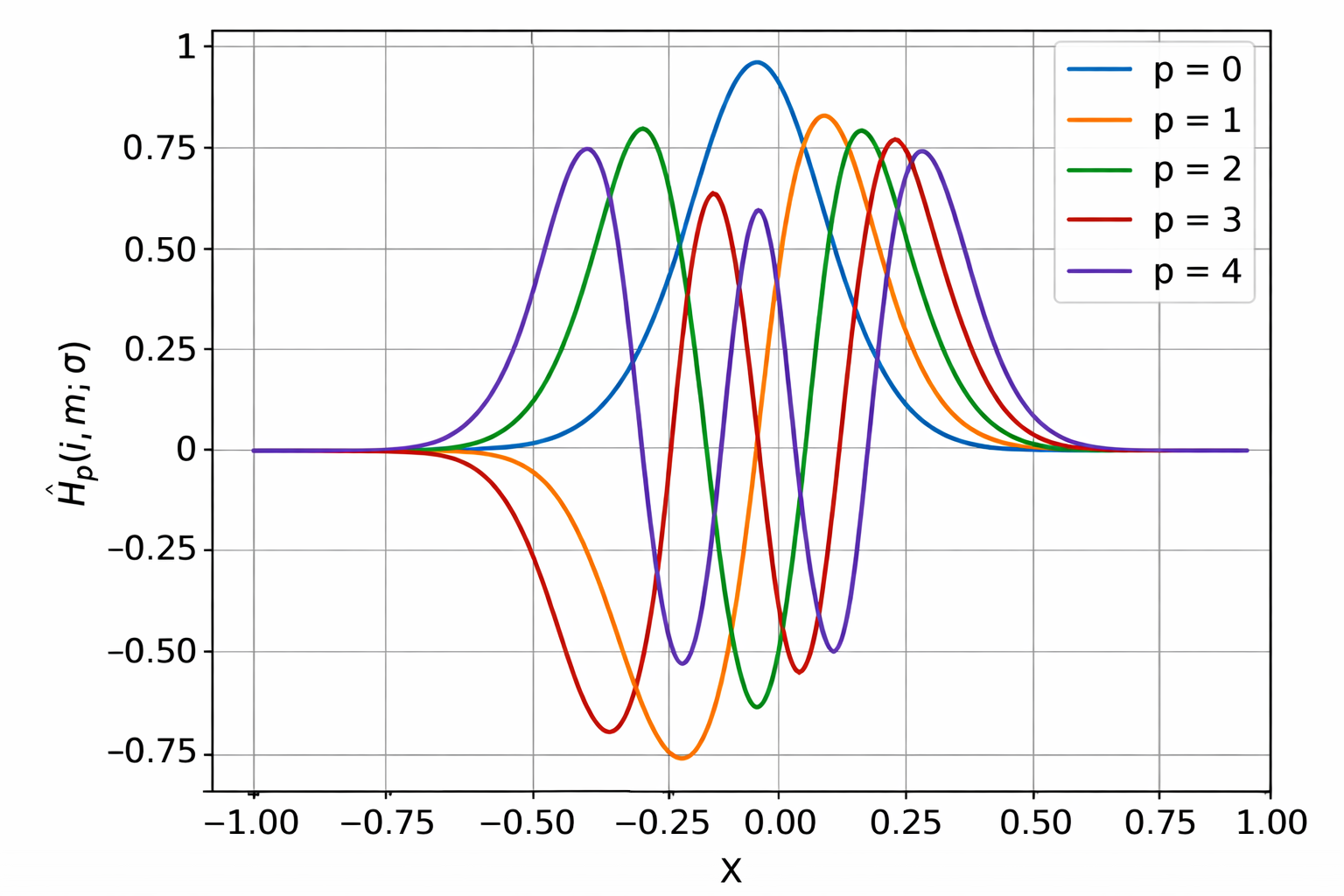}
   \caption{Gaussian-Hermite polynomials of order 0 to 4 with Gaussian weighting $(\sigma=0.75)$.}
   \label{fig:fig1}
\end{figure}

Figure \ref{fig:fig1} shows a plot of Gaussian-Hermite polynomials for order $p \in \{1,2,3,4\}$ and $\sigma=0.75$. The Gaussian weighting acts as a low-pass filter, suppressing high-frequency or noisy components in the input signal. The lower-order polynomials for $p\in\{0,1\}$ are smooth and broad, representing low-frequency components. Higher-order ones oscillate more rapidly, reflecting higher spatial frequencies. The orthonormality in Eq.\@~\eqref{eq:orthonormal} ensures that cGHPs provide a set of well-separated basis functions for decomposing the signal in the frequency domain, which enables shift-invariant and aliasing-resistant downsampling. It can also be seen that $\widehat{H}_p(x,\sigma)\in (-1, 1)$, and also that the non-zero values are effectively localised in a small neighborhood around the origin controlled by the scale parameter $\sigma$ \cite{yang2017scale}.

\subsection{Gaussian-Hermite moments}
The continuous Gaussian-Hermite moments (cGHM) \cite{yang2011image} for a 1D function $g : \mathbb{R} \to \mathbb{R}$ is defined as:
\begin{equation}
    \hat{\eta}_p = \int_{-\infty}^{\infty}\widehat{H}_p(x, \sigma)g(x)\odif{x},
    \label{cGHM1D}
\end{equation}
and can be easily extended to an arbitrary number of dimensions $n$ for $nD$ functions $g : \mathbb{R}^n \to \mathbb{R}$ as follows:
\begin{equation}
    \hat{\eta}_{p_1,\dots,p_n} = \int_{-\infty}^{\infty}\cdots\int_{-\infty}^{\infty}g(x_1, \cdots,x_n)\prod_{k=1}^{n}\widehat{H}_{p_k}(x_k, \sigma)\odif{x_k}.
    \label{eq:cGHMnD}
\end{equation}

Let $f : \{0,\dots,M-1\}\times\{0,\dots,M-1\} \to [0,1]$ define a square grayscale image of size $M\times M$ pixels. To calculate the discrete Gaussian-Hermite moments (dGHM) for a 2D image function $f$, we first define a discrete Gaussian-Hermite polynomial on $\{0, \dots, M-1\}$ as \cite{yang2011image,singh2018multi}:
\begin{equation}
    \begin{split}
    \widetilde{H}_{p^M}(i, \sigma) &= \widehat{H}_p(x_i,\sigma), \\
    \textnormal{with} \quad x_i &= \frac{2i-M+1}{M-1},
    \\
    \text{and} \quad \Delta x &= x_i - x_{i-1} = \frac{2}{M-1}.
    \end{split}
    \label{eq:dGHP}
\end{equation}
Eq.~\eqref{eq:dGHP} centralises and normalises the discrete interval $\{0, \dots, M-1\}$ to the continuous range $[-1, 1]$. This transformation preserves the essential characteristics of the cGHP, since its non-zero values are largely concentrated around the origin (see Fig.~\ref{fig:fig1}). 

We can now define the discrete Gaussian-Hermite moments (dGHM) \cite{yang2011image} for an image function $f$:
\begin{align}
    A_{pq} &= \sum_{i=0}^{M-1} \sum_{j=0} ^ {M-1} \widetilde{H}_p^M(i,\sigma)\widetilde{H}_q^M(j,\sigma)f(i,j) \Delta x \Delta y \nonumber, \\ 
    &= \frac{4}{(M-1)^2}\sum_{i=0}^{M-1} \sum_{j=0} ^ {M-1} \widetilde{H}_p^M(i,\sigma)\widetilde{H}_q^M(j,\sigma)f(i,j),
    \label{eq:dGHM} 
\end{align}
where $\Delta x$ and $\Delta y$ correspond to pixel length and with according to Eq.\@~\eqref{eq:dGHP}. Either the full or downsampled image (up to size $N\times N$, with $N \leq M$) can be reconstructed from its dGHMs of orders $(0,0)$ up to $(N-1, N-1)$ by:
\begin{equation}
    \hat{f}(i,j) = \sum_{p=0}^{N-1}\sum_{q=0}^{N-1} A_{p,q}\widetilde{H}_p^
    M(i, \sigma)\widetilde{H}_q^M(j, \sigma).
    \label{eq:reconstruction}
\end{equation}

Eqs.\@~\eqref{eq:dGHM} and \eqref{eq:reconstruction} for calculating GHMs of an image, and reconstructing the image from its GHMs, are iterative and slow and therefore to accelerate tensor processing, these equations can be reformulated into matrix form.
Let $\mathbf{F}$ be a matrix representation of image $f$ with size $M\times M$, defined as $\mathbf{F}(i,j)=f(i,j)$, and $\mathbf{Q}$ contains dGHPs of orders up to $M$ for discrete values $\{0, \dots, M-1\}$ such that $\mathbf{Q}(i,j)=\widetilde{H}_j^M(i, \sigma)$. Eq.\@~\eqref{eq:dGHM} can be rewritten such that a GHM matrix $\mathbf{A}$ (where $\mathbf{A}(i,j)=A_{i,j}$) containing all discrete moments of order $(0,0)$ up to $(M-1, M-1)$ can be computed as follows \cite{yang2011image}:
\begin{equation}
    \mathbf{A}(\mathbf{F}) = \frac{4}{(M-1)^2} \mathbf{Q}\mathbf{F}\mathbf{Q}^\top.
    \label{eq:dGHMMatrix}
\end{equation}
Similarly, Eq.\@~\eqref{eq:reconstruction} can be rewritten in matrix form as:
\begin{equation}
    \widehat{\mathbf{F}} = \mathbf{Q}^\top\mathbf{A}(\mathbf{F})\mathbf{Q}.
    \label{eq:reconstructionMatrix}
\end{equation}



Let $\mathcal{F}$ instead represent a colour image $\mathcal{F} : \{0,\dots,M-1\}\times\{0,\dots,M-1\} \to [0,1]^S$ with $S$ channels. Individual channels of this image are denoted by $f_s : \{0,\dots,M-1\}\times\{0,\dots,M-1\} \to [0,1]$ for each channel $s < S$.
$\mathcal{F}$ is then a 3D tensor, with $\mathcal{F}_{i,j,s}=\mathbf{F}_s(i,j)=f_s(i,j)$. Eqs.\@~\eqref{eq:dGHMMatrix}
 and \eqref{eq:reconstructionMatrix} still hold for calculations with colour images.

\subsection{Shift Invariance}

An operator $O(\cdot)$ is \textbf{invariant} under a geometric transformation $g(\cdot)$ if the following holds for any image $f$ (or colour image $\mathbf{\mathcal{F}}$) (see also Fig.\@~\ref{fig:equ_inv}):
\begin{equation}
    O(g(f(i,j))=O(f(i,j)).
    \label{eq:equivariance}
\end{equation}
This can be expressed for images $\mathbf{F}$ in matrix notation, where $\mathbf{O}(\cdot)$ and $\mathbf{G}(\cdot)$ represent the matrix form of the operator $O(\cdot)$ and the geometric transformation $g(\cdot)$:
\begin{equation}
    \mathbf{O(G(F))}=\mathbf{O(F)}.
    \label{eq:equivarianceMatrixForm}
\end{equation}

Let $(\cdot)^{[c,r]}$ denote the \textbf{wrap-shift} geometric transformation defined on square matrices of size $M\times M$ (image or otherwise), defined as:
\begin{equation}
    \mathbf{F}^{[c,r]}(i,j) = \mathbf{F}(i-c \pmod M, j-r \pmod M),
    \label{eq:shiftwrap}
\end{equation}
which shifts the image by $c\in\mathbb{N}$ columns and $r\in\mathbb{N}$ rows, and where $\cdot \pmod \cdot$ denotes the binary modulo operator. We note two important properties of this transformation.

\begin{property}
    For square matrices $\mathbf{A}$ and $\mathbf{B}$ of size $M\times M$, and some $a,b,c\in\mathbb{N}$, the following property of wrap-shift holds with respect to matrix multiplication:
    \begin{equation}
        \mathbf{A}^{[a,b]}\mathbf{B}^{[c,a]}=\mathbf{AB}^{[c,b]}.
    \end{equation}
    \label{prop:outer}
\end{property}

\begin{property}
    For a square matrix $\mathbf{A}$ of size $M \times M$, and some $a,b,c,d\in\mathbb{N}$, the following property holds for a chan application of wrap-shift:
    \begin{equation}
        \left(\mathbf{A}^{[a,b]}\right)^{[c,d]}=\mathbf{A}^{[a+c,b+d]}.
    \end{equation}
    \label{prop:chain}
\end{property}

Further, let $\mathbf{F}' = \mathbf{F}^{[c,r]}$ denote a (wrap-)shifted version of the image $\mathbf{F}$ by some arbitrary shift $[c,r]$, and $\mathbf{A}'$ the corresponding GHM matrix obtained using Eq.\@~\eqref{eq:dGHMMatrix}. It can be shown that the GHM matrix of the original image $\mathbf{A}$ is not equal to $\mathbf{A}'$, by decomposing Eq.\@~\eqref{eq:dGHMMatrix} into two matrix multiplication operations, $\mathbf{A}_1=\mathbf{Q}\mathbf{F}$ and $\mathbf{A}=\frac{4}{(M-1)^2}\mathbf{A}_1\mathbf{Q}^\top$. Since the matrix multiplication operation depends on the specific order of elements in each row and column, shifting the image $\mathbf{F}$ changes the result of this operation. Therefore, $\textbf{A}_1 = \textbf{Q} \textbf{F}$ and $\textbf{A}_1' = \textbf{Q} \textbf{F}^{'}$ are not equal, and consequently $\textbf{A}$ and $\textbf{A}^{'}$ are also not equal.

In order to derive a shift invariant formulation for dGHMs, we need therefore to shift the rows and columns of $\mathbf{Q}$, to adapt to the shift in rows and columns of $\mathbf{F}$. 
Specifically, we define $\mathbf{A}^*_{c,r}(\mathbf{F})$ to be the the \textbf{dGHM matrix for a pivot position} $(c,r)$, calculated as:
\begin{equation}
\mathbf{A}^*_{c,r}(\mathbf{F})
= \frac{4}{(M-1)^2}\,
\mathbf{Q}^{[r,0]} \, \mathbf{F} \, \mathbf{Q}^{\top[0,c]}.
\end{equation}
Note that the \emph{columns} of $\mathbf{Q}$ are shifted by $r$, while the \emph{rows} of $\mathbf{Q}^{\top}$ are shifted by $c$.

Finally, we define a \textbf{max-centered dGHM matrix} $\mathbf{A}^*_{\max}(\mathbf{F})$ as the dGHM matrix obtained when $(c_f, r_f)$ is used as the pivot position:
\begin{equation}
\mathbf{A}^*_{\max}(\mathbf{F}) = \mathbf{A}^*_{c_{\max},r_{\max}}(\mathbf{F}).
\label{eq:Amax}
\end{equation}

\begin{theorem}
The operator $\mathbf{A}^*_{\max}(\mathbf{F})$ is wrap-shift invariant.
\label{th:ainv}
\end{theorem}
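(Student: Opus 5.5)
The plan is to reduce the statement to two facts. First, the pivot position moves covariantly with the input. Second, the shifted bases in $\mathbf{A}^*_{c,r}$ exactly compensate a shift of $\mathbf{F}$. Throughout, $(c_{\max},r_{\max})$ is the location of the maximal entry of $\mathbf{F}$, chosen with a fixed tie-breaking rule. I will assume the maximum is unique, or that the tie-breaking rule commutes with cyclic shifts, and state this assumption explicitly.

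\textbf{Step 1 (covariance of the pivot).} Let $\mathbf{F}'=\mathbf{F}^{[c,r]}$. By Eq.~\eqref{eq:shiftwrap}, $\mathbf{F}'(i,j)=\mathbf{F}(i-c,\,j-r)$ with indices taken mod $M$. Hence the maximum of $\mathbf{F}'$ sits at $(c_{\max}+c,\; r_{\max}+r) \pmod M$. The multiset of entries is unchanged, so the maximal value is the same; only its position moves. Therefore
\[
\mathbf{A}^*_{\max}(\mathbf{F}') = \mathbf{A}^*_{c_{\max}+c,\,r_{\max}+r}\bigl(\mathbf{F}^{[c,r]}\bigr).
\]

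\textbf{Step 2 (compensation identity).} I will show that for all pivots $(a,b)$ and shifts $(c,r)$,
\[
\mathbf{A}^*_{a+c,\,b+r}\bigl(\mathbf{F}^{[c,r]}\bigr)=\mathbf{A}^*_{a,b}(\mathbf{F}).
\]
Combining this with Step 1 at $(a,b)=(c_{\max},r_{\max})$ proves the theorem.

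To prove the identity, I would use Property~\ref{prop:chain} to split the shifted bases, writing $\mathbf{Q}^{[b+r,0]}=(\mathbf{Q}^{[b,0]})^{[r,0]}$ and similarly for $\mathbf{Q}^{\top[0,a+c]}$. I would then apply Property~\ref{prop:outer} twice, first to the product $\mathbf{Q}^{[\cdot,0]}\mathbf{F}^{[c,r]}$ and then to the result times $\mathbf{Q}^{\top[0,\cdot]}$, so that the shift indices matching the inner dimension of each product cancel.

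As a safeguard against convention slips, I would also verify the identity entrywise. Expanding the $(k,l)$ entry as a double sum over $i,j$ of (shifted $\mathbf{Q}$ entry)$\cdot\mathbf{F}(i-c,j-r)\cdot$(shifted $\mathbf{Q}^\top$ entry), the substitutions $i\mapsto i+c$ and $j\mapsto j+r$ (mod $M$) are bijections of $\{0,\dots,M-1\}$. Under them, the summand becomes the one defining $\mathbf{A}^*_{a,b}(\mathbf{F})(k,l)$.

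\textbf{Main obstacle.} The hardest part is bookkeeping of the conventions, not the computation. The definition shifts the \emph{columns} of $\mathbf{Q}$ by $r$ and the \emph{rows} of $\mathbf{Q}^\top$ by $c$. Meanwhile, $\mathbf{F}^{[c,r]}$ shifts the first index by $c$ and the second by $r$. The inner (summed) index of $\mathbf{Q}\mathbf{F}$ is the row index of $\mathbf{F}$, so which of $c$ or $r$ must be absorbed by $\mathbf{Q}$ depends on these conventions.

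I would first fix them by the entrywise check and, if needed, state the pivot with coordinates matched so that the shift on the summed index of $\mathbf{F}$ agrees with the shift on the corresponding index of $\mathbf{Q}$ or $\mathbf{Q}^\top$. With that, Property~\ref{prop:outer} applies verbatim.

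A secondary issue is ties in the maximum. I would handle this by choosing the pivot as, for example, the first maximizer in a cyclic order anchored at a shift-covariant reference, or simply by assuming a unique maximum, as is generic for real-valued feature maps.
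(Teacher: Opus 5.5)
Your proposal is correct and follows the paper's proof step for step. As in the paper, the pivot moves covariantly, $(c'_{\max},r'_{\max})=(c_{\max}+c,\,r_{\max}+r)$; Property~\ref{prop:chain} then splits the shifted bases, and Property~\ref{prop:outer} is applied twice so the shifts on the contracted indices cancel. Your entrywise substitution check and your explicit unique-maximum assumption are worthwhile additions. Property~\ref{prop:outer} and the definition of $\mathbf{A}^*_{c,r}$ are consistent only if $[x,y]$ means (column shift, row shift), while Eq.~\eqref{eq:shiftwrap} read literally shifts the row index by the first entry. The paper also treats ties only after its proof.
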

\begin{proof}
    Let $\mathbf{F}' = \mathbf{F}^{[c,r]}$ for an arbitrary shift $(c,r)$. Then, we have $(c'_{\max}, r'_{\max}) = (c_{\max}+c \pmod M, r_{\max}+r \pmod M)$. Therefore, we can show:

\begin{align}
\mathbf{A}^*_{\max}\!\left(\mathbf{F}^{[c,r]}\right)
&= \mathbf{A}^*_{c'_{\max},r'_{\max}}(\mathbf{F}') \nonumber \\
&= \mathbf{A}^*_{c_{\max}+c,\;r_{\max}+r}\!\left(\mathbf{F}^{[c,r]}\right) \nonumber \\
&= \frac{4}{(M-1)^2}
   \mathbf{Q}^{[r_{\max}+r,0]}\,
   \mathbf{F}^{[c,r]}\,
   \mathbf{Q}^{\top[0,c_{\max}+c]} \tag{Property \ref{prop:chain}} \nonumber \\
&= \frac{4}{(M-1)^2}
   (\mathbf{Q}^{[r_{\max},0]})^{[r,0]}\,
   \mathbf{F}^{[c,r]}\,
   (\mathbf{Q}^{\top[0,c_{\max}]})^{[0,c]} \tag{Property \ref{prop:outer}} \nonumber \\
&= \frac{4}{(M-1)^2}
   (\mathbf{Q}^{[r_{\max},0]}\mathbf{F})^{[c,0]}\,
   (\mathbf{Q}^{\top[0,c_{\max}]})^{[0,c]} \tag{Property \ref{prop:outer}} \nonumber \\
&= \frac{4}{(M-1)^2}
   \mathbf{Q}^{[r_{\max},0]}\,
   \mathbf{F}\,
   \mathbf{Q}^{\top[0,c_{\max}]} \nonumber \\
&= \mathbf{A}^*_{c_{\max},r_{\max}}(\mathbf{F}) \nonumber \\
\mathbf{A}^*_{\max}\!\left(\mathbf{F}^{[c,r]}\right) &= \mathbf{A}^*_{\max}(\mathbf{F}).
\end{align}
Therefore, $\mathbf{A}^*_{\max}$ is invariant to wrap-shift according to Eq.\@~\eqref{eq:equivarianceMatrixForm}.
\end{proof}

While the existence of more than one maximum element in the image is theoretically possible, obtaining more than a single unique maximum value after a convolutional layer with $S$ channels is highly improbable. Based on our experiments, we did not observe cases where multiple sampling candidates produced identical response values, suggesting that such events are extremely rare in practice. Nevertheless, in the unlikely case, the ambiguity can be resolved by selecting the maximum within a larger neighborhood window (e.g., $3\times3$ or $5\times5$). This increases the likelihood of obtaining a unique maximum due to the larger spatial context. 

A reconstruction of the image $\mathbf{F}$ from a max-centered dGHM matrix $\mathbf{A}^*_{\max}(\mathbf{F})$ can be obtained through:
\begin{equation}
    \widehat{\mathbf{F}}^* = \mathbf{Q}^\top\mathbf{A^*_{\max}}(\mathbf{F})\mathbf{Q}.
    \label{eq:reconstructionMatrixEquiv}
\end{equation}

To obtain the downsampled reconstruction of spatial size $N\times N$, with $N\leq M$, a new $\mathbf{Q}$ is constructed of size $M\times N$.

\begin{theorem}
    The operator $\widehat{\mathbf{F}}^*$ is invariant to wrap-shifts of the input $\mathbf{F}$.
\end{theorem}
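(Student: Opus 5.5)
The proof reduces to Theorem~\ref{th:ainv}. The reconstruction $\widehat{\mathbf{F}}^*$ in Eq.~\eqref{eq:reconstructionMatrixEquiv} is a fixed function of the max-centered dGHM matrix alone. Concretely, it is the map $\mathbf{A}\mapsto \mathbf{Q}^\top\mathbf{A}\mathbf{Q}$, with $\mathbf{Q}$ a constant matrix that depends only on $M$, $N$ and $\sigma$ and not on the input image. If the inner operator is unchanged under a wrap-shift of $\mathbf{F}$, then so is anything computed from it by a deterministic input-independent map.

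The steps, in order, are as follows. First, fix an arbitrary shift $(c,r)$ and set $\mathbf{F}'=\mathbf{F}^{[c,r]}$. Second, write
\begin{equation*}
\widehat{\mathbf{F}}^*(\mathbf{F}') = \mathbf{Q}^\top\mathbf{A}^*_{\max}\!\left(\mathbf{F}^{[c,r]}\right)\mathbf{Q}.
\end{equation*}
Third, invoke Theorem~\ref{th:ainv} to replace $\mathbf{A}^*_{\max}(\mathbf{F}^{[c,r]})$ by $\mathbf{A}^*_{\max}(\mathbf{F})$. This gives $\widehat{\mathbf{F}}^*(\mathbf{F}')=\widehat{\mathbf{F}}^*(\mathbf{F})$, which is exactly the invariance condition Eq.~\eqref{eq:equivarianceMatrixForm}. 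Finally, the colour case follows channel-wise, as remarked after Eq.~\eqref{eq:reconstructionMatrix}. Here the pivot $(c_{\max},r_{\max})$ must be shared across channels, and it shifts consistently because the maximum is taken over the whole tensor.

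The only point needing care is the downsampled case $N<M$, where $\mathbf{Q}$ is rebuilt with size $M\times N$. The dimensions must remain compatible: the moment matrix is then $N\times N$ (orders up to $N-1$), and the product $\mathbf{Q}^\top\mathbf{A}\mathbf{Q}$ has to be read with the appropriate orientation of the new $\mathbf{Q}$. I would first check that Theorem~\ref{th:ainv} still holds when the pivot-shifted $\mathbf{Q}^{[r,0]}$ is non-square. The wrap-shift acts only along the spatial index of length $M$, so Properties~\ref{prop:outer} and~\ref{prop:chain} apply along that index without change.

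I expect the main obstacle to be that the inner invariance rests on the maximum being unique, so that the pivot transforms as $(c_{\max}+c,\,r_{\max}+r)$. I will state this as the same standing assumption used for Theorem~\ref{th:ainv}, or as resolved by the paper's tie-breaking rule. Given that assumption, no further argument is needed.
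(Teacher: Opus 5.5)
Your proposal is correct and follows the paper's proof: both treat $\widehat{\mathbf{F}}^*$ as the fixed, input-independent map $\mathbf{A}\mapsto\mathbf{Q}^\top\mathbf{A}\mathbf{Q}$ applied to $\mathbf{A}^*_{\max}(\mathbf{F})$, and then substitute $\mathbf{A}^*_{\max}(\mathbf{F}^{[c,r]})=\mathbf{A}^*_{\max}(\mathbf{F})$ from Theorem~\ref{th:ainv}. Your extra caveats (a unique maximum, so that the pivot moves to $(c_{\max}+c,r_{\max}+r)$, and the non-square $\mathbf{Q}$ when $N<M$) are left implicit in the paper and you handle them soundly; the one overstatement is that the pivot \emph{must} be shared across channels, since per-channel pivots would each shift consistently under a common wrap-shift and would give invariance equally well.
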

\begin{proof}
    We can show for $\mathbf{F}^{[c,r]}$ with any arbitrary shift $(c,r)$:

    \begin{align}
        \widehat{\mathbf{F}}^*\left(\mathbf{F}^{[c,r]}\right) &= \mathbf{Q}^\top\mathbf{A^*_{\max}}\left(\mathbf{F}^{[c,r]}\right)\mathbf{Q} \tag{Theorem \ref{th:ainv}} \nonumber \\
        &= \mathbf{Q}^\top\mathbf{A^*_{\max}}\left(\mathbf{F}\right)\mathbf{Q} \nonumber \\ 
    \widehat{\mathbf{F}}^*\left(\mathbf{F}^{[c,r]}\right) &= \widehat{\mathbf{F}}^*\left(\mathbf{F}\right). 
    \end{align}
    Therefore, $\widehat{\mathbf{F}}^*$ is invariant to wrap-shift according to Eq.\@~\eqref{eq:equivarianceMatrixForm}.
\end{proof}

Note that $\widehat{\mathbf{F}}^*$, being \emph{invariant} to wrap-shift, will reconstruct a \emph{wrap-shifted} version of the image $\mathbf{F}^{[M-c_{\max}, M-r_{\max}]}$ instead of the original image $\mathbf{F}$ or any $\mathbf{F}^{[c, r]}$ for arbitrary $(c, r)$.
A comprehensive, step-by-step depiction of the downsampling procedure, detailing the mechanisms of GHS, is provided in Figure \ref{fig:fig2}.

\begin{figure*}[t]
  \centering
   \includegraphics[width=1.0  \linewidth]{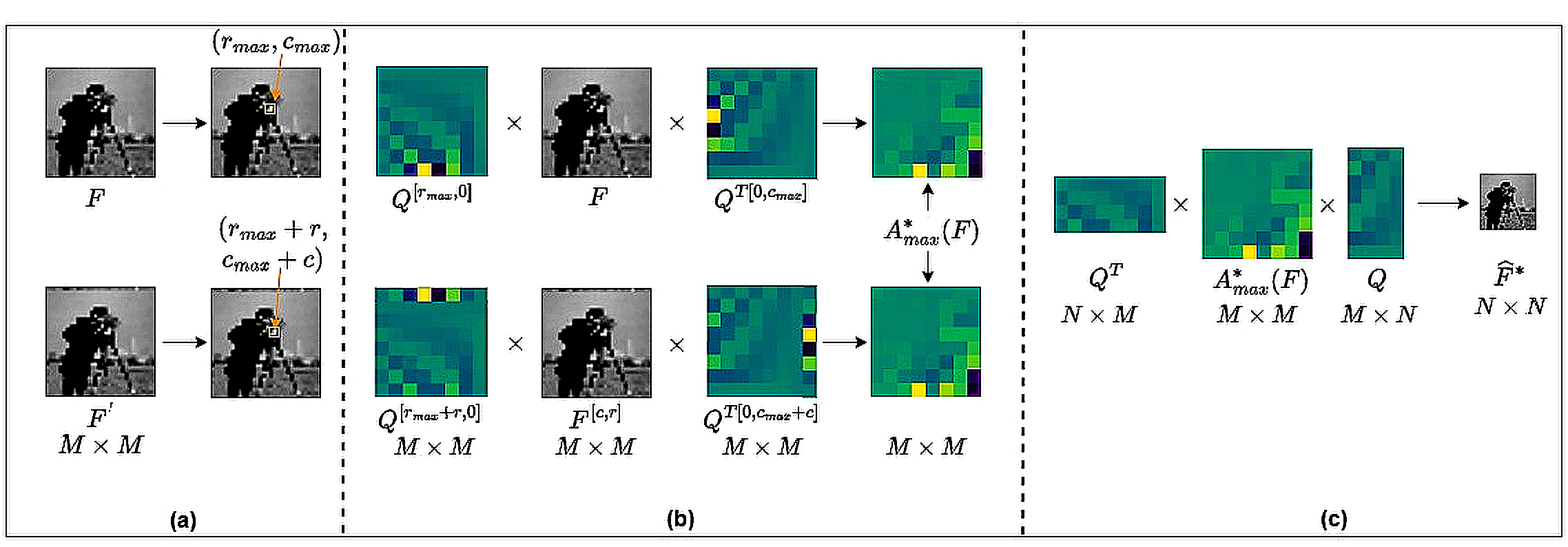}\\
\caption{
GHS process for an image. 
(a) Unshifted ($\mathbf{F}$) and shifted ($\mathbf{F}'$) images with their maximum locations. 
(b) Construction of the GHM $\mathbf{A}^*_{\max}(\mathbf{F})$ using $\mathbf{Q}$ and $\mathbf{Q}^T$. 
(c) Downsampled image $\widehat{\mathbf{F}}^*$ obtained using  (Eq.~\eqref{eq:reconstructionMatrixEquiv}).
}

   \label{fig:fig2}
\end{figure*}

\begin{figure*}[t]
\centering
\footnotesize
\begin{tabular}{ccccc}
{\includegraphics[width=2.0cm]{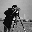}}&
{\includegraphics[width=2.0cm]{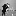}} &
{\includegraphics[width=2.0cm]{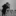}} &
{\includegraphics[width=2.0cm]{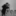}} &
{\includegraphics[width=2.0cm]{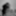}} \\
(a) Original image & (b) Max-pool & (c) LPF & (d) APS & (e) GHS \\

{\includegraphics[width=2.0cm]{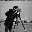}} &
{\includegraphics[width=2.0cm]{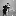}} &
{\includegraphics[width=2.0cm]{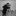}} &
{\includegraphics[width=2.0cm]{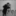}} &
{\includegraphics[width=2.0cm]{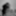}} \\
(f) Shifted image & (g) Max-pool & (h) LPF & (i) APS & (j) GHS 
\\
\end{tabular}
\caption{Comparison of downsampling methods. (a) Original image, (b-e) unshifted downsampled results, (f) shifted image and (g-j) shifted downsampled results.}
\label{fig3}
\end{figure*}

%% file: experiments_new.tex
In this section, we evaluate the performance of the proposed Gaussian-Hermite Sampling (GHS) in comparison to established benchmark approaches, including Adaptive Polyphase Sampling (APS)~\cite{chaman2021truly}, BlurPool (LPF)~\cite{zhang2019making}, and conventional max-pooling for image classification.

\subsection{Datasets}
We conduct experiments on three widely used benchmark datasets: CIFAR-10, CIFAR-100, and rotated MNIST (MNIST-rot). CIFAR-10~\cite{krizhevsky2009learning} consists of 60{,}000 colour images of size $32\times32$ across 10 object classes, with 50{,}000 images for training and 10{,}000 for testing. CIFAR-100~\cite{krizhevsky2009learning} has the same image resolution and number of images but spans 100 fine-grained classes, making it more challenging. To evaluate rotation robustness, we additionally use the MNIST-rot dataset~\cite{weiler2018learning}, which contains rotated versions of the MNIST handwritten digits, comprising 10{,}000 training images and 2{,}000 test images, each of size $28\times28$.

In addition to these datasets, we use the standard \emph{cameraman} image~\cite{gonzalez2009digital} as a test image to evaluate the numerical shift invariance. This image is used exclusively for controlled shift experiments and is not used for training or classification.

\subsection{Implementation Details}
All experiments are implemented following the training configuration of Chaman and Dokmanic~\cite{chaman2021truly} and Zhang~\cite{zhang2019making}, using stochastic gradient descent (SGD) with a batch size of 256, momentum of 0.9, and a weight decay of $5\times10^{-4}$. The typical values of $\sigma$ used in Eq.\@~\eqref{eq:reconstruction} are calculated as follows \cite{yang2011image}:
\begin{equation}
    \sigma = \begin{cases} 
0.9(N)^{-0.52} & \text{if } N \geq 1, \\
1.0 & \text{if } N = 0.
\end{cases}
\end{equation}

The experiments are conducted by replacing all pooling layers in a CNN model by the proposed GHS layers. Specifically, we experiment using four ResNet variants: ResNet-18, ResNet-20, ResNet-50, and ResNet-56. The initial learning rate is set to 0.1 and reduced by a factor of 0.1 at epochs 100 and 200, for a total of 250 training epochs. All experiments are repeated over 10 random seeds, and results are reported as mean $\pm$ standard deviation. For fair comparison, both LPF and APS use the same 3-tap anti-aliasing filter as in their original implementations.

\subsection{Evaluation Metrics and Setup}

We evaluate the proposed and baseline methods using classification accuracy under multiple evaluation settings, including standard test data, out-of-distribution (OOD) perturbations, and rotated inputs.

\noindent\textit{Classification Accuracy:}
Classification accuracy is defined as the proportion of correctly classified test samples. Unless otherwise specified, accuracy is reported on unshifted test images.

\noindent\textit{Evaluation under OOD Perturbation:}
To assess robustness to distributional shifts, we introduce controlled perturbations to the test images, including patch erasures and vertical flips. Classification accuracy and prediction consistency are reported on the perturbed data.

\noindent\textit{Evaluation under Rotation:}
Rotation robustness is evaluated on MNIST-rot using an $E(2)$-equivariant CNN. Since MNIST-rot labels are rotation-invariant, classification accuracy is measured on rotated test images. Prediction consistency between original and rotated inputs is also reported.

\subsection{Experiments and Results}
In this subsection, we present experiments evaluating shift invariance, followed by classification accuracy, robustness to OOD perturbations, and rotation-invariant classification performance.\\

\noindent\textit{Shift Invariance:}
Numerical shift invariance is evaluated using the cameraman image ~\cite{gonzalez2009digital}, which enables controlled analysis of downsampling behavior under spatial shifts. For an image $f$ and its shifted version $f'$, a downsampling operator $O$ is assessed by computing the absolute difference (AD) between their downsampled outputs. Lower AD values indicate stronger shift invariance. A single-channel $32 \times 32$ image is downsampled using each method. Figure~\ref{fig3} illustrates the original and shifted inputs together with their corresponding downsampled outputs. The resulting AD values are: max-pooling (2.77), LPF (0.70), APS (0.48), and GHS (0.00), confirming that GHS achieves perfect shift invariance at the individual layer level and eliminates the need for global pooling to restore consistency.\\

\noindent\textit{Classification Performance:}
To evaluate the classification performance of the proposed downsampling layer, we report classification consistency and accuracy on CIFAR-10 using four ResNet variants, and on CIFAR-100 using ResNet-18. Table~\ref{table:consistency} reports the classification consistency results on CIFAR-10 across all ResNet variants, while Table~\ref{table:accuracy} presents the corresponding accuracy values. Both APS and GHS achieve 100\% classification consistency on all models, whereas LPF shows moderate improvement and the baseline exhibits the weakest robustness. In terms of accuracy (Table~\ref{table:accuracy}), GHS achieves the highest performance, reaching 95.54\% on ResNet-18. Results on CIFAR-100 dataset are shown in Table~\ref{table_2} which follow a similar pattern: GHS and APS attain 100\% consistency, with GHS outperforming APS in accuracy (77.92\% vs.\ 77.12\%). These findings demonstrate that GHS provides both theoretical and empirical advantages in achieving shift-invariant behavior.\\

\begin{table*}[t]
\footnotesize   
\caption{Classification consistency on CIFAR-10 test set using ResNet models with baseline, LPF, APS, and GHS downsampling methods. Best results are highlighted in bold.}
\vspace{0.5em}
\label{table:consistency}
\centering
{%
\begin{tabular}{c|cccc}
\hline
\textbf{Method} & \textbf{ResNet-18} & \textbf{ResNet-20} & \textbf{ResNet-50} & \textbf{ResNet-56} \\
\hline
Baseline & 90.88$\pm{0.16}$ & 90.83$\pm{0.14}$ & 88.96$\pm{0.14}$ & 91.89$\pm{0.31}$ \\
LPF      & 98.10$\pm{0.12}$ & 96.53$\pm{0.13}$ & 97.38$\pm{0.20}$ & 96.90$\pm{0.19}$ \\
APS      & 100.00$\pm{0.00}$    & 100.00$\pm{0.00}$    & 100.00$\pm{0.00}$    & 100.00$\pm{0.00}$ \\
\textbf{GHS} & \textbf{100.00}$\pm{0.00}$ & \textbf{100.00}$\pm{0.00}$ & \textbf{100.00}$\pm{0.00}$ & \textbf{100.00}$\pm{0.00}$ \\
\hline
\end{tabular}%
}
\end{table*}

\begin{table*}[t]
\footnotesize   
\caption{Classification accuracy on CIFAR-10 test set using ResNet models with baseline, LPF, APS, and GHS downsampling methods. Best results are highlighted in bold.}
\vspace{0.5em}
\label{table:accuracy}
\centering
{%
\begin{tabular}{c|cccc}
\hline
\textbf{Method} & \textbf{ResNet-18} & \textbf{ResNet-20} & \textbf{ResNet-50} & \textbf{ResNet-56} \\
\hline
Baseline & 91.96$\pm{0.31}$ & 89.76$\pm{0.26}$ & 90.05$\pm{0.18}$ & 91.40$\pm{0.20}$ \\
LPF      & 94.28$\pm{0.29}$ & 91.56$\pm{0.30}$ & 94.12$\pm{0.32}$ & 92.98$\pm{0.23}$ \\
APS      & 94.48$\pm{0.17}$ & \textbf{91.75}$\pm{0.23}$ & 94.07$\pm{0.25}$ & 92.93$\pm{0.29}$ \\
\textbf{GHS} & \textbf{95.54}$\pm{0.13}$ & 91.72$\pm{0.18}$ & \textbf{94.40}$\pm{0.17}$ & \textbf{93.13}$\pm{0.20}$ \\
\hline
\end{tabular}%
}
\end{table*}

\begin{table}[t]
\centering
\footnotesize   
\caption{Consistency and accuracy on the CIFAR-100 test set using the ResNet-18 model.}
\vspace{0.5em}
\begin{tabular}{c|c|c}
\hline
\textbf{Method} & \textbf{Consistency} & \textbf{Accuracy} \\
\hline
Baseline & 84.35 & 75.59 \\
LPF      & 90.54 & 76.85 \\
APS      & 100.00   & 77.12 \\
\textbf{GHS} & \textbf{100.00} & \textbf{77.92} \\
\hline
\end{tabular}
\label{table_2}
\end{table}

\begin{table*}[t]
\centering
\footnotesize   
\caption{Consistency and accuracy using ResNet-18 on unflipped and flipped CIFAR-10.}
\vspace{0.5em}
\begin{tabular}{c|cc|cc}
\hline
\multirow{2}{*}{\textbf{Method}} 
& \multicolumn{2}{c|}{\textbf{Consistency}} 
& \multicolumn{2}{c}{\textbf{Accuracy}} \\
\cline{2-5}
& Unflip & Flip & Unflip & Flip \\
\hline
DA & 97.84 & 84.94 & 94.22 & 44.97 \\
LPS & 98.19 & 89.21 & 94.28 & 46.21 \\
APS & 100.00 & 100.00 & 94.48 & 47.55 \\
GHS & 100.00 & 100.00 & \textbf{95.02} & \textbf{49.68}\\
\hline
\end{tabular}
\label{table_3}
\end{table*}



\begin{figure*}[t]
\centering

\begin{minipage}{0.49\linewidth}
\centering
\begin{tikzpicture}
\begin{axis}[
    width=\linewidth,
    height=6cm,
    ymin=90, ymax=100.5,
    ytick={90,92,94,96,98,100},
    ylabel={Classification consistency (\%)},
    xlabel={Size of randomly erased square patch},
    symbolic x coords={0,4,6,8},
    label style={font=\small},   
    xtick=data,
    ybar,
    bar width=6pt,
    enlarge x limits=0.18,
    axis line style={line width=1pt},
    tick style={line width=1pt},
    tick align=outside,     
    tick pos=left,          
    xtick pos=bottom,       
    legend style={at={(0.5,-0.30)}, anchor=north, legend columns=4, draw=none},
]

\addplot[fill=cyan!70, draw=cyan!70]
coordinates {(0,100) (4,100) (6,100) (8,100)}; 

\addplot[fill=orange!85, draw=orange!85]
coordinates {(0,100) (4,100) (6,100) (8,100)}; 

\addplot[fill=green!50, draw=green!50]
coordinates {(0,98.1) (4,97.8) (6,97.0) (8,95.0)}; 

\addplot[fill=magenta!65, draw=magenta!65]
coordinates {(0,97.9) (4,97.1) (6,96.6) (8,95.2)}; 

\legend{GHS, APS, LPF, DA}
\end{axis}
\end{tikzpicture}
\end{minipage}
\hfill
\begin{minipage}{0.49\linewidth}
\centering
\begin{tikzpicture}
\begin{axis}[
    width=\linewidth,
    height=6cm,
    ymin=86, ymax=96,
    ytick={86,88,90,92,94,96},
    ylabel={Classification accuracy (\%)},
    xlabel={Size of randomly erased square patch},
    symbolic x coords={0,4,6,8},
    label style={font=\small},   
    xtick=data,
    ybar,
    bar width=6pt,
    enlarge x limits=0.18,
    axis line style={line width=1pt},
    tick style={line width=1pt},
    tick align=outside,     
    tick pos=left,          
    xtick pos=bottom,       
    legend style={at={(0.5,-0.30)}, anchor=north, legend columns=4, draw=none},
]

\addplot[fill=cyan!70, draw=cyan!70]
coordinates {(0,94.9) (4,93.8) (6,91.8) (8,89.6)}; 

\addplot[fill=orange!85, draw=orange!85]
coordinates {(0,94.7) (4,93.7) (6,91.7) (8,89.5)}; 

\addplot[fill=green!50, draw=green!50]
coordinates {(0,94.2) (4,93.5) (6,91.5) (8,88.6)}; 

\addplot[fill=magenta!65, draw=magenta!65]
coordinates {(0,94.1) (4,93.4) (6,91.4) (8,87.8)}; 

\legend{GHS, APS, LPF, DA}
\end{axis}
\end{tikzpicture}
\end{minipage}

\caption{Consistency (left) and accuracy (right) of ResNet-18 with different downsampling methods on CIFAR-10 under random patch erasures.}
\label{fig:patch_robustness}
\end{figure*}
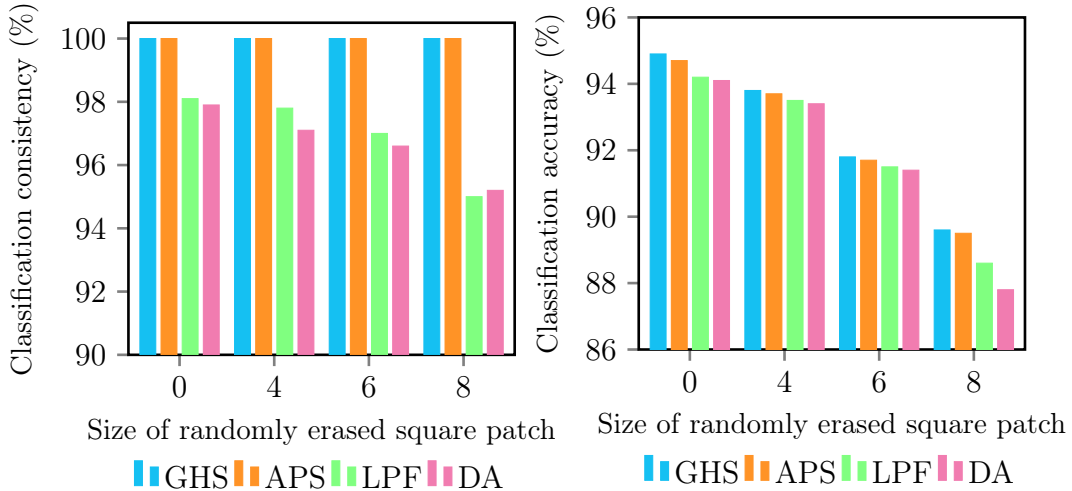

\noindent\textit{Out-of-Distribution Evaluation:}
We further evaluate robustness on perturbed CIFAR-10 test images by introducing controlled distributional shifts, including random patch erasures and vertical flips. Specifically, random square patches of sizes 0, 4, 6, and 8 pixels are erased from the images. Models trained using standard, LPF, APS, and GHS downsampling are tested under these perturbations. For patch erasures of varying sizes, shown in Fig.~\ref{fig:patch_robustness}, GHS maintains 100\% consistency and achieves the highest classification accuracy, demonstrating superior generalisation to unseen data. Similar trends are observed for vertically flipped CIFAR-10 images, as reported in Table~\ref{table_3}, where GHS preserves 100\% consistency and improves accuracy by 2.1\% over APS.\\

\noindent\textit{Rotation Invariance Classification:}
To evaluate robustness under rotations, we integrate GHS into a $E(2)$-equivariant CNN~\cite{weiler2019general} and evaluate performance on the MNIST-rot dataset. This setting allows us to assess whether the proposed downsampling layer can be effectively combined with group-equivariant architectures designed to handle rotational symmetries. As shown in Table~\ref{table_4}, GHS achieves the highest performance among all methods, outperforming both APS and LPF, and demonstrating that GHS preserves its advantages when incorporated into rotation-equivariant models.

\subsection{Complexity Analysis}
Table~\ref{table_6} reports the computational overhead of different pooling and subsampling methods measured over 100 batches (batch size $32$, input size $3 \times 32 \times 32$) on a single NVIDIA T4 GPU. As expected, baseline max pooling is the most efficient. LPF introduces a modest overhead of approximately $50\%$ ($\sim1.5\times$) due to the additional filtering operation. In contrast, APS and GHS are roughly an order of magnitude ($\sim10\times$) more expensive than baseline max pooling. However, GHS adds only a small overhead relative to APS. Given that APS is already widely adopted, this marginal increase is unlikely to hinder the practical use of GHS, which remains parameter-free while providing strict shift-invariance guarantees.
\begin{table}[t]
\footnotesize   
\centering

\begin{minipage}{0.48\linewidth}
\centering
\caption{Accuracy on the rotated MNIST test set using an $E(2)$-equivariant model with different downsampling methods.}
\begin{tabular}{c|c}
\hline
\textbf{Method} & \textbf{Accuracy} \\ 
\hline
Baseline & 96.69 \\
LPF      & 97.85 \\
APS      & 98.12 \\
\textbf{GHS} & \textbf{98.82} \\
\hline
\end{tabular}
\label{table_4}
\end{minipage}
\hfill
\begin{minipage}{0.48\linewidth}
\centering
\caption{Computational time for Max pool, LPF, APS, and GHS measured over 100 batches, each with batch size 32 and image size $3\times 32 \times 32$.}
\begin{tabular}{c|c}
\hline
\textbf{Method} & \textbf{Time (ms)} \\
\hline
Baseline (Max pool) & $\mathbf{0.27\pm0.10}$ \\
LPF & $0.41\pm0.13$ \\
APS & $2.56\pm0.20$ \\
\textbf{GHS} & $2.71\pm0.12$ \\
\hline
\end{tabular}
\label{table_6}
\end{minipage}

\end{table}